\def\eqref#1{equation~\ref{#1}}
\def\1{\bm{1}}
\DeclareMathAlphabet{\mathsfit}{\encodingdefault}{\sfdefault}{m}{sl}
\SetMathAlphabet{\mathsfit}{bold}{\encodingdefault}{\sfdefault}{bx}{n}
\newtheorem{theorem}{Theorem}
\newtheorem{definition}{Definition}
\newtheorem{remark}{Remark}
\title{Intrinsic Dimensionality Explains the Effectiveness of Language Model Fine-Tuning}
\author{Armen Aghajanyan, Luke Zettlemoyer, Sonal Gupta \\
Facebook\\
\texttt{\{armenag,lsz,sonalgupta\}@fb.com} \\
}
\begin{document}

\maketitle

\begin{abstract}
Although pretrained language models can be fine-tuned to produce state-of-the-art results for a very wide range of language understanding tasks, the dynamics of this process are not well understood, especially in the low data regime. Why can we use relatively vanilla gradient descent algorithms (e.g., without strong regularization) to tune a model with hundreds of millions of parameters on datasets with only hundreds or thousands of labeled examples? 
In this paper, we argue that analyzing fine-tuning through the lens of intrinsic dimension provides us with empirical and theoretical intuitions to explain this remarkable phenomenon. We empirically show that common pre-trained models have a very low intrinsic dimension; 
in other words, there exists a low dimension reparameterization that is as effective for fine-tuning as the full parameter space.  
For example, by optimizing only 200 trainable parameters randomly projected back into the full space, we can tune a RoBERTa model to achieve 90\% of the full parameter performance levels on MRPC. Furthermore, we empirically show that pre-training implicitly minimizes intrinsic dimension and, perhaps surprisingly, larger models tend to have lower intrinsic dimension after a fixed number of pre-training updates, at least in part explaining their extreme effectiveness. 
Lastly, we connect intrinsic dimensionality with low dimensional task representations and compression based generalization bounds to provide intrinsic-dimension-based generalization bounds that are independent of the full parameter count.
\end{abstract}

\section{Introduction}
Pre-trained language models~\citep{GPT, BERT, ROBERTA, BART,MARGE}  provide the defacto initialization for modeling most existing NLP tasks. However, the process of fine-tuning them on often very small target task datasets remains somewhat mysterious. Why can we use relatively vanilla gradient descent algorithms (e.g., without strong regularization) to tune a model with hundreds of millions of parameters on datasets with only hundreds or thousands of labeled examples?  

We propose intrinsic dimensionality as a new lens through which fine-tuning can be analyzed \citep{intrinsic_dimension}. An objective function's intrinsic dimensionality describes the minimum dimension needed to solve the optimization problem it defines to some precision level. In the context of pre-trained language models, measuring intrinsic dimensional will tell us how many free parameters are required to closely approximate the optimization problem that is solved while fine-tuning for each end task. For example, we will show that 200 parameters (randomly projected back into the full parameter space) are enough to represent the problem of tuning a RoBERTa model to within 90\% of the performance of the full model. More generally, we also describe a set of strong empirical and theoretical connections between intrinsic dimensionality, number of parameters, pre-training, and generalization.

We first empirically show that standard pre-trained models can learn a large set of NLP tasks with very few parameters and that the process of pre-training itself implicitly minimizes the intrinsic dimension of later tuning for different NLP tasks. We continue by conducting a study across over a dozen various pre-trained models to show that number of parameters strongly inversely correlates with intrinsic dimensionality, at least in part to justify the extreme effectiveness of such models. We interpret pre-training as providing a framework that learns how to compress the average NLP task.  Finally, we connect intrinsic dimensional with low dimensional task representations and compression based generalization bounds to provide intrinsic-dimension-based generalization bounds that are independent of the full parameter count, further justifying why these methods generalize so well in practice across tasks.

The contributions of our paper are the following:
\begin{itemize}
    % \item We discuss and augment an existing method for calculating the intrinsic dimension for fine-tuning objectives within the context of Transformer models. Specifically we propose a new way to compute a Structure Aware Intrinsic Dimension (\textbf{SAID}).
    \item We empirically show that common NLP tasks within the context of pre-trained representations have an intrinsic dimension several orders of magnitudes less than the full parameterization.
    \item We propose a new interpretation of intrinsic dimension as the downstream fine-tuning task's minimal description length within the framework of the pre-trained model. Within this interpretation, we empirically show that the process of pre-training implicitly optimizes the description length over the average of NLP tasks, without having direct access to those same tasks.
    \item We measure the intrinsic dimension of a large set of recently developed pre-training methods. We discover that there exists a  fortuitous trend where larger models tend to have a smaller intrinsic dimension.
    \item Lastly, we show that compression based generalization bounds can be applied to our intrinsic dimension framework to provide generalization bounds for large pre-trained models independent of the pre-trained model parameter count.
\end{itemize}

\section{Related Work}
Calculating the intrinsic dimension of an objective function was proposed \cite{intrinsic_dimension}. In their paper, they analyzed the impact of various architectures on the intrinsic dimensionality of their objective. Our work is a direct extension of this paper, focusing on analyzing pre-trained representations instead.

There is a large collection of literature analyzing pre-trained models from the perspective of capacity. For example, a recent line of work has shown that pre-trained models such as BERT are redundant in their capacity, allowing for significant sparsification without much degradation in end metrics \citep{bert_lottery_ticket, bert_lottery_all_winners, hongyuan_lotter_ticket}. \cite{adapter_network} showed that fine-tuning top layers of pre-trained models is not effective and that alternate methods allow fine-tuning effectively with a couple of percent of the parameters. Furthermore, we can view computing the intrinsic dimensionality as a continuous relaxation of the sparsification problem.

Moreover, standard approaches towards fine-tuning seem to have non-trivial effects on the generalization of pre-trained representations \citep{RXF}. A holistic explanatory picture of the successes of fine-tuning has not yet been painted. A clear understanding of the underlying mechanisms which lead to the incredible generalization of fine-tuned pre-trained representations is currently missing. Moreover, we still do not understand why various pre-training methodology manifests in universally useful representations.

\section{Intrinsic Dimensionality of Finetuning}

\paragraph{Background}
An objective function's intrinsic dimension measures the minimum number of parameters needed to reach satisfactory solutions to the respective objective \citep{intrinsic_dimension}. Alternatively, the intrinsic dimension represents the lowest dimensional subspace in which one can optimize the original objective function to within a certain level of approximation error. Computing the exact intrinsic dimensional of the objective function is computation intractable; therefore, we resort to heuristic methods to calculate an upper bound.
Let $\theta^{D}=\left[\theta_0, \theta_1,..., \theta_m\right]$ be a set of $D$ parameters that parameterize some model $f(\cdot, \theta)$. Instead of optimizing the empirical loss in the original parameterization ($\theta^{D}$), the subspace method fine-tunes the model via the following re-parametrization in the lower-dimensionsal $d$-dimensions:
\begin{equation}
    \theta^{D} = \theta^{D}_0 + P(\theta^{d})
    \label{eq:subspace_def}
\end{equation}
where $P: \mathbb{R}^d \rightarrow \mathbb{R}^D$ projects from a parameter from a lower dimensional $d$ to the higher dimensional $D$. Intuitively, we do an arbitrary random projection onto a much smaller space; usually, a linear projection, we then solve the optimization problem in that smaller subspace. If we reach a satisfactory solution, we say the dimensionality of that subspace is the intrinsic dimension. This methodology was proposed in the seminal paper by \cite{intrinsic_dimension}. Concretely \cite{intrinsic_dimension} proposed 3 various actualizations of $P$; a random linear dense projection ($\theta^{d}W$), random linear sparse projection($\theta^{d}W_{\text{sparse}}$) and random linear projection via the Fastfood transform \citep{fastfood}.

We will primarily use the Fastfood transform, defined as:
\begin{align}
    \theta^{D} = \theta^{D}_0 + \theta^{d}M && M=HG\Pi HB \label{eq:did}
\end{align}
The factorization of $M$ consists of $H$, a Hadamard matrix, $G$, a random diagonal matrix with independent standard normal entries, $B$ a random diagonal matrix with equal probability $\pm 1$ entries, and $\Pi$ a random permutation matrix. Furthermore, the matrix multiplication with a Hadamard matrix can be computed in $\mathcal{O}(D \log{d})$ via the Fast Walsh-Hadamard Transform. Note that everything but $\theta_d$ is fixed; therefore, the optimization problem lies only in $d$-dimensions. Note that if we place a constraint of $M$ being a binary matrix, we recover the sparsification problem; therefore, we can view finding intrinsic dimensionality as a continuous relaxation of the sparsification problem. 

The standard method of measuring the intrinsic dimensionality of an objective as proposed by \cite{intrinsic_dimension} requires searching over various $d$, training using standard SGD over the subspace reparameterization $\theta^{D}$ and selecting the smallest $d$ which provides us with a satisfactory solution ($d_{90}$). \cite{intrinsic_dimension} defined the \textit{satisfactory solution} as being 90\% of the full training metric. For example, if we reach 85\% accuracy training a model with all of its parameters, the goal is to find the smallest $d$, which would reach $0.9 * 85\% = 76.5\%$ accuracy; we call this dimension $d_{90}$. Let us also note that by merely initializing $\theta^d = 0$ we recover the original parameterization $\theta^{D}_0$ which in the context of fine-tuning represents the original weights of the pre-trained model.

The way \cite{intrinsic_dimension} define a satisfactory solution reduces the dependence of the dataset's size on the calculation of intrinsic dimension. For a small dataset, we will generally have worse end metrics; therefore, we have a lower $d_{90}$ cut-off; inversely, a larger dataset will require a more non-trivial $d_{90}$ cut-off.
\paragraph{Structure Aware Intrinsic Dimension}
%We are interested in applying this method to the fine-tuning objective of large pre-trained models to better to understand the complex properties of these pre-trained models. 
Due to the large size of pre-trained language models (generally in the hundreds of millions of parameters), the only computationally reasonable subspace optimization method is one that utilizes the Fastfood transform. For example, if we are interested in subspace training with $d=1000$ for the RoBERTa-Large model using a dense matrix, we would require 1.42 terabytes of memory to store just the projection matrix.

Unfortunately, the method of finding the intrinsic dimension proposed by \cite{intrinsic_dimension} is unaware of the layer-wise structure of the function parameterized by $\theta$. Existing literature argues that in attention-based pre-trained models, individual layers specialize separately \citep{what_does_bert_look_at}; therefore, it is useful to incorporate a notion of structure when computing $d_{90}$. 
We define Structure-Aware Intrinsic Dimension (SAID) as the following
\begin{equation}
    \theta^{D}_i = \theta^{D}_{0, i} + \lambda_i P(\theta^{d-m})_i
\end{equation}
For $m$ layers, we trade $m$ parameters from our subspace parameter $\theta_d$ to allow for layer-wise scaling through jointly learned $\lambda$, thus $\theta_d$ becomes $\left[\theta_{d-m}, \lambda\right]$. This allows the SAID method to focus a larger capacity of $\theta^{d-m}$ towards specific layers what might carry more relevant information for the task at hand. Conversely, we will refer to the layer unaware method (Equation~\ref{eq:did}) as the Direct Intrinsic Dimension (DID) method.
% TODO: possible add a section of related works to discuss things like lottery ticket hypothesis.
\section{Intrinsic Dimensionality of Common NLP Tasks}
\label{sec:mes}

\subsection{Sentence Prediction}
We first empirically calculate the intrinsic dimension of various pre-trained models on a set of sentence prediction tasks from the GLUE Benchmark \citep{GLUE}. We focus on analyzing BERT \citep{BERT} and RoBERTa \citep{ROBERTA} at both the base and large model sizes.

We chose to experiment with MRPC \citep{mrpc} and QQP \citep{qqp} as reference examples of small and large tuning datasets. MRPC is a binary classification task for predicting semantic equivalency for two paraphrases with roughly 3700 training samples, while QQP is a binary classification task for predicting semantic equality of two questions, with roughly 363k samples. For every dataset and every model, we run 100 subspace trainings with $d$ ranging from 10 to 10000 on a log scale. For every training run, we do a small hyperparameter search across four learning rates. We initialize every $\theta_d$ to the zero vector to allow for our starting point to be the original pre-trained model. Our subspace optimization method also operates over the randomly initialized sentence classification head to ensure we have exactly $d$ parameters to optimize.

We use both the SAID and DID subspace optimization methods, which we implemented in the Huggingface Transformers library \citep{huggingface}. We present the results in Figure~\ref{fig:sp_DID}.

\begin{figure}
  \centering
  \subfloat{
    \includegraphics[width=1.0\textwidth]{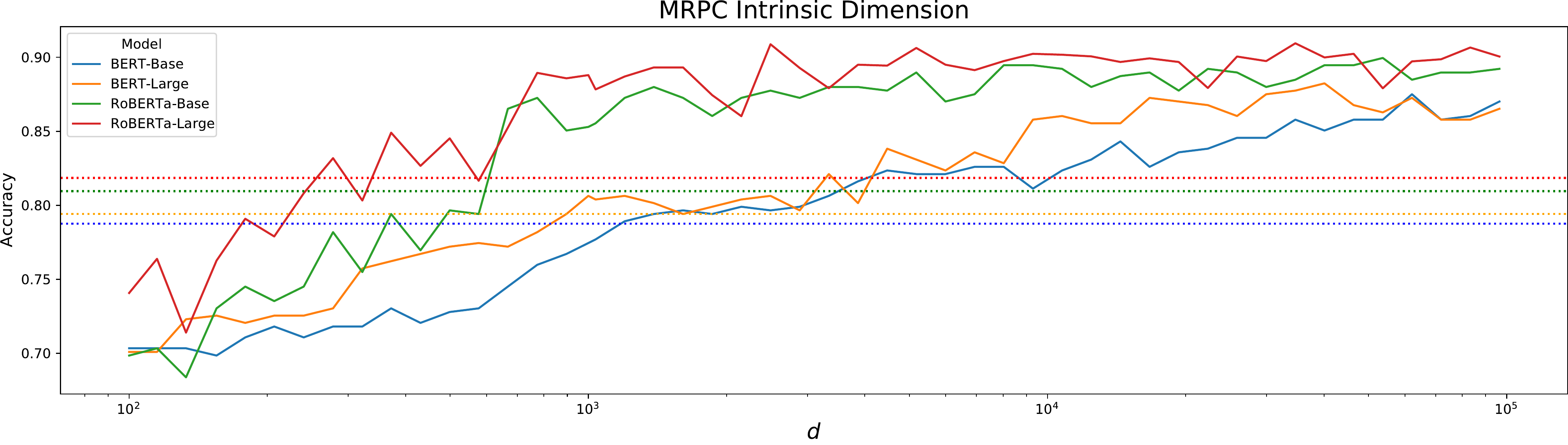}
    } \\
  \subfloat{
    \includegraphics[width=1.0\textwidth]{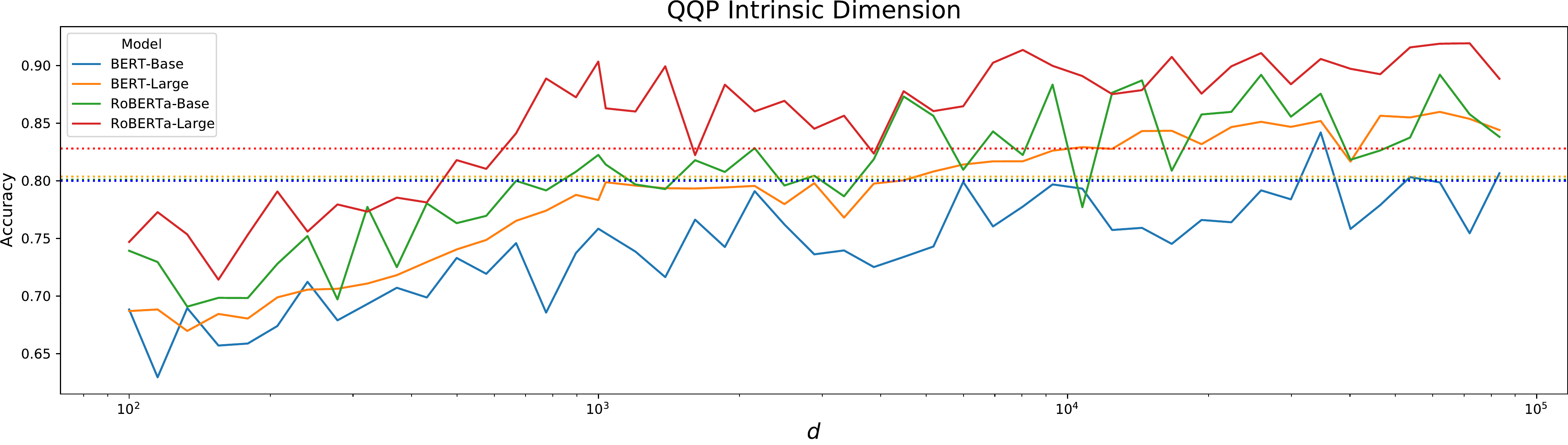}
    }
  \caption{The following figures show the evaluation accuracy on two datasets and four models across a range of dimensions $d$ for the DID method. The horizontal lines in each figure represent the 90\% solution of the respective full model.} \label{fig:sp_DID}
\end{figure}

\begin{wraptable}{r}{7.5cm}
\vspace{-2em}
\centering
\small
\begin{tabular}{lrrrr}\\\toprule 
    & \multicolumn{2}{c}{SAID}  & \multicolumn{2}{c}{DID} \\ \cmidrule(lr){2-3}\cmidrule(lr){4-5} 
Model & MRPC & QQP & MRPC & QQP \\\midrule
BERT-Base & 1608 & 8030 & 1861 & 9295 \\  
BERT-Large & 1037 & 1200& 2493 & 1389\\  \midrule
RoBERTa-Base & 896 & 896& 1000 & 1389 \\  
RoBERTa-Large & \bf{207} & \bf{774}& 322 & \bf{774}\\  \bottomrule
\end{tabular}
\caption{Estimated $d_{90}$ intrinsic dimension for a set of sentence prediction tasks and common pre-trained models. We present both the \textit{SAID} and \textit{DID} methods.}
\label{tab:mrpc_qqp_id}
\end{wraptable} 

\subsection{Analysis}
The first takeaway is the incredible low dimensionality of viable solutions. With RoBERTa-Large, we can reach 90\% of the full fine-tuning solution of MRPC using roughly 200 parameters and 800 parameters for QQP (Table~\ref{tab:mrpc_qqp_id}). Recall that our approximation of intrinsic dimension is necessarily crude by using random projections and restricting them to the use of Fastfood transform; therefore, it is likely that the true intrinsic dimension is much lower.

Furthermore, RoBERTa consistently outperforms BERT across various subspace dimensions $d$ while having more parameters. We leave a more in-depth analysis of model parameter size on intrinsic dimensionality to a later section (\S\ref{section:parameter_exploration}).

Lastly we see that adding a notion of structure in the computation of intrinsic dimension is beneficial with the SAID method consistently improving over the structure unaware DID method.

\section{Intrinsic Dimension, Pre-Training, and Generalization Gap}
One interpretation of the intrinsic parameter vector is that it encodes the task at hand with respect to the original pre-trained representations. Therefore, we can interpret $d$ as the minimal description length of the task within the framework dictated by the pre-trained representations \citep{min_desc_length}. Under this interpretation of intrinsic dimensionality, we hypothesize that pre-training is implicitly lowering the intrinsic dimensionality of the average NLP task, and therefore compress the minimal description length of those same tasks.

What do we more precisely mean by intrinsic parameter encoding a task within the framework provided by the pre-trained representations? Traditionally, a finetuned model (e.g. for a classification tasks) simply consists of a classification head $g$, parameterized by $w_g$ applied to fine-tuned representations $f$, parameterized by $w_f$ per sample $x$. Therefore, to fully describe a task, we need to pack together parameterizations and weights $\left\{g,f, w_g, w_f\right\}$. This model description is completely decoupled from the original weights of the pre-trained representation $w_{f_0}$, therefore to represent $n$ classification tasks, we need to maintain $n \left\{w_g, w_f\right\}$; additionally, the task representation is incredibly high dimensional. Conversely, fine-tuning utilizing SAID in $d$-dimensions requires storing only $\theta_d$ per task, a single random seed used to generate $M$ and the original pre-trained weights $w_{f_0}$. Therefore, we can represent arbitrary NLP tasks within a single pre-trained model framework with $d+1$ parameters.

For example, in the last section, we represented MRPC with roughly 200 parameters, which translates to needing less than a kilobyte of data to encode a complex natural language task within the framework provided by RoBERTa.

We hypothesize that the better the pre-trained models are, the fewer bits (description length) are needed to represent the average NLP task, as we will demonstrate empirically in the next section.

\subsection{Pre-Training Intrinsic Dimension Trajectory}
\label{sec:trajectory}
To verify our hypothesis of pre-training optimizing intrinsic dimension, we retrain a RoBERTa-Base from scratch and measure various NLP tasks' intrinsic dimensions using the SAID method across various checkpoints. We completely replicate the setting as described by \citep{ROBERTA} apart from only training for a total of 200k steps (instead of 500k) with half the batch size (1k). To calculate the intrinsic dimension more efficiently, we reuse the best learning rates discovered in Section~\ref{sec:mes} for $d < 10000$ and use a fixed learning rate for anything else. To find $d_{90}$ we do a binary search across $d$ per each checkpoint, with a minimum $d$ of 100 and a maximum of 4 million. The ``full solution" that we use when deciding $d_{90}$ cut-off is computed by fine-tuning the checkpointed model in the standard way. We compute SAID on six datasets; \textit{MRPC}, \textit{QQP}, \textit{Yelp Polarity} \citep{yelp_polarity}, \textit{SST-2} \citep{sst2}, \textit{MNLI} \citep{mnli} and \textit{ANLI} using all rounds of data \citep{anli}.

\begin{figure}
    \centering
    \includegraphics[width=1.0\textwidth]{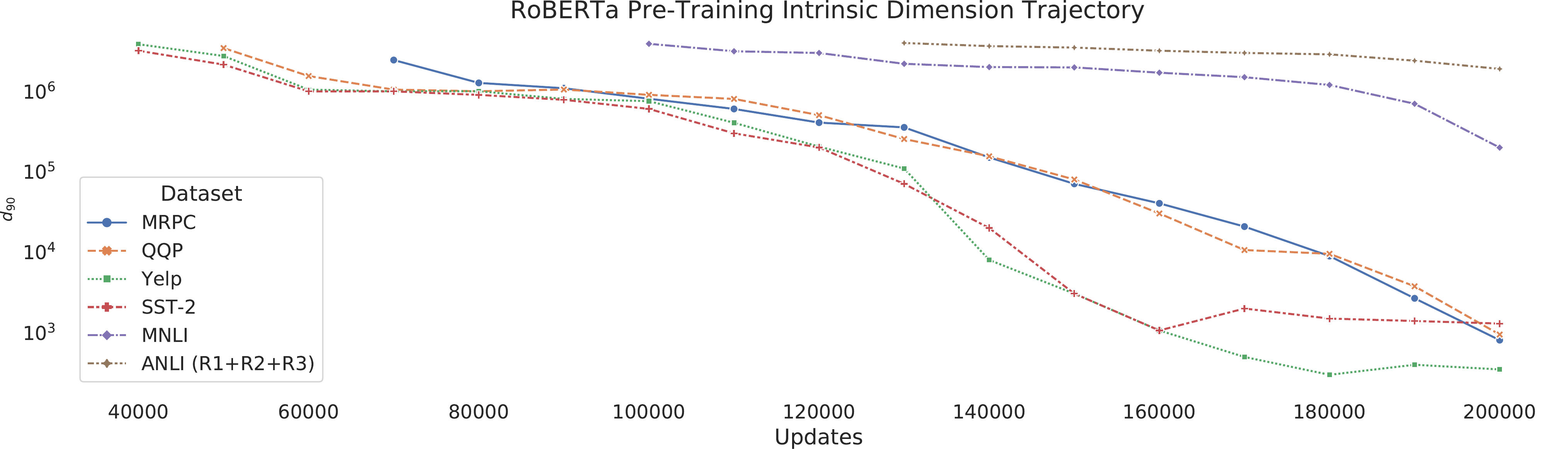}
    \caption{Every 10k updates of RoBERTa-Base that we trained from scratch, we compute $d_{90}$ for six datasets; MRPC, QQP, Yelp Polarity, SST-2, MNLI, and ANLI. If we were unable to compute a $d_{90}$ for a specific checkpoint, we do not plot the point, hence some datasets start at later points. Unable to compute means either we could not fine-tune the full checkpoint to accuracy above majority class or stabilize SAID training.}
    \label{fig:roberta_id}
\end{figure}

We present our results in Figure~\ref{fig:roberta_id}. We see that the intrinsic dimensionality of RoBERTa-Base monotonically decreases as we continue pre-training. We do not explicitly optimize for intrinsic dimensionality, specifically during pre-training (the language model does not have access to downstream datasets!), but none-the-less the intrinsic dimension of these downstream tasks continues to decrease.

More so, tasks that are easier to solve consistently show lower intrinsic dimensionality across all checkpoints, for example, \textit{Yelp Polarity} vs. the notoriously tough \textit{ANLI} dataset. The correlation between tasks traditionally hard for RoBERTa and their large intrinsic dimension hints at a connection between generalization and intrinsic dimension. We will discuss generalization further in Section~\S\ref{sec:generalization}.

Given our task representation interpretation of intrinsic dimensionality, we argue that the large scale training of Masked Language Models (MLM) learns generic and distributed enough representations of language to facilitate downstream learning of highly compressed task representations. Furthermore, we argue for another perspective of pre-training learning representations that form a compression framework with respect to various NLP tasks.

\subsection{Parameter Count and Intrinsic Dimension}
\label{section:parameter_exploration}
% TODO: Would despined figures be more aesthetic here? Maybe we should despine across all figures
We would also like to measure the relationships between the parameter count of arbitrary pre-trained models and the intrinsic dimension of downstream NLP tasks. The optimal experiment to run would be to fix the pre-training method, e.g., MLM RoBERTa style, vary the architecture size from small to very big, and compute the intrinsic dimension of a group of tasks at every size of the model. Unfortunately, such an experiment is computationally infeasible due to the need to train many RoBERTa models.

Due to these constraints, we opt to do an empirical study over existing pre-trained models, regardless of the pre-training method. We show that the trend is strong enough to overcome differences in training methodology. We select the following pre-trained models in our study: BERT \citep{BERT}, RoBERTa \citep{ROBERTA}, BART \citep{BART}, Electra \citep{ELECTRA}, Albert \citep{ALBERT}, XLNet \citep{XLNET}, T5 \citep{T5}, and XLM-R \citep{XLMR}. Furthermore, we selected various sizes of these models, as available publicly within the HuggingFace Transformers library~\citep{huggingface}.

We used the MRPC dataset and computed intrinsic dimension for every pre-trained model utilizing the same binary search methodology mentioned in the previous section with additional small hyper-parameter searches across learning rate (due to the wide range of learning rates needed by various models). 

\begin{figure}[h]
    \centering
    \includegraphics[width=1.0\textwidth]{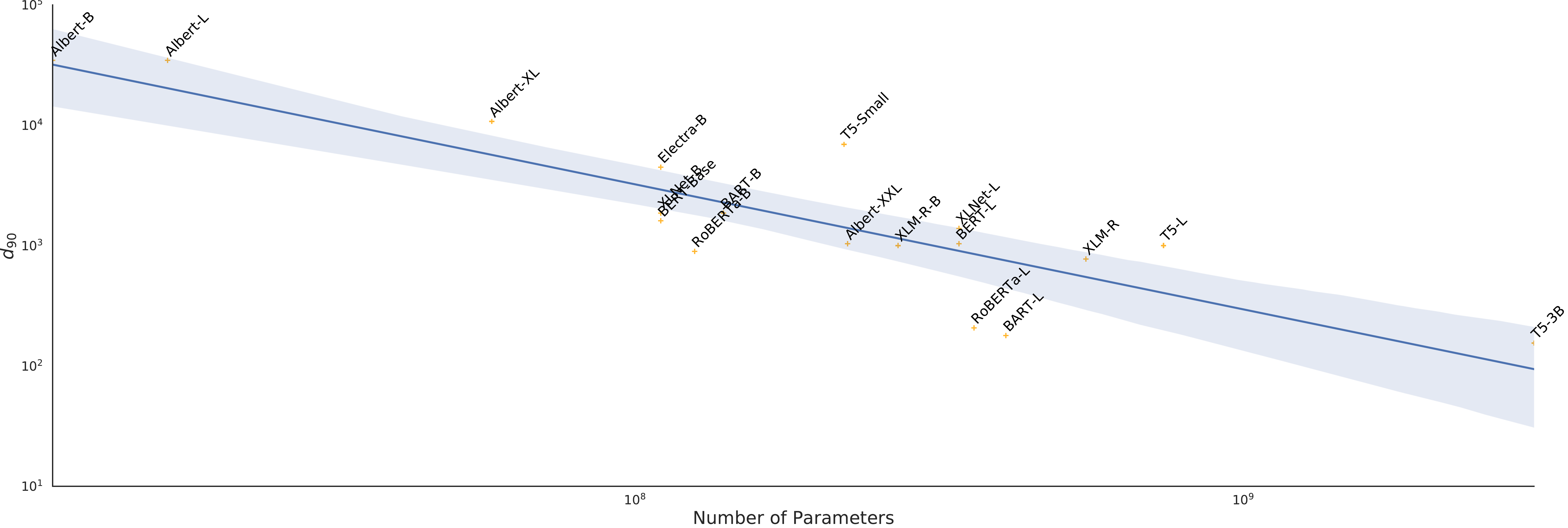}
    \caption{We calculate the intrinsic dimension for a large set of pre-trained models using the SAID method on the MRPC dataset.}
    \label{fig:mrpc_parameter_id}
\end{figure}

We present our results in Figure~\ref{fig:mrpc_parameter_id}. We see a strong general trend that as the number of parameters increases, the intrinsic dimension of fine-tuning on MRPC decreases. We ran this experiment on other datasets to ensure that this is not an artifact of the dataset. Our experiments showed the same trend; we refer to the Appendix for all trends per dataset.

Within the same window of number of parameters, pre-training methodology becomes essential. For example, in the regime of $10^8$ parameters, the RoBERTa method of pre-training dominates similar sized pre-training methods. However, there does not seem to be a method that can overcome the limitations induced by the number of parameters. Interpreting these results through the lens of learning a compression framework for NLP tasks is straightforward; the more parameters we have in the model, the less we need to represent a task.

\subsection{Generalization Bounds through Intrinsic Dimension}
\label{sec:generalization}
We have shown strong empirical evidence connecting pre-training, fine-tuning, and intrinsic dimensionality. However, we have yet to argue the connection between intrinsic dimensionality and generalization. Given that we have seen pre-training minimize intrinsic dimension, we hypothesize that generalization improves as the intrinsic dimension decreases.

To do so, we will empirically experiment with the connections between $d_{90}$ and evaluation set performance by looking at various checkpoints from our RoBERTa experiments in Section~\S\ref{sec:trajectory}. We also plot the relative generalization gap (delta between train time performance and test time performance).

\begin{figure}
    \centering
    \includegraphics[width=1.0\textwidth]{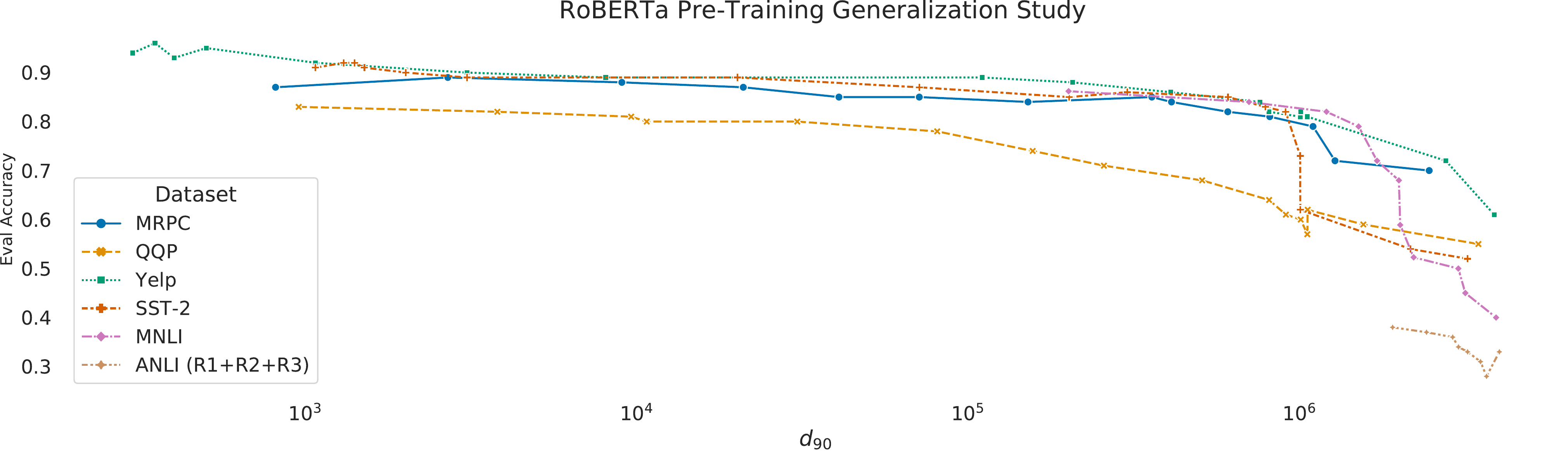}
    \caption{We plot the evaluation accuracy of six datasets across various intrinsic dimensionalities. There is a strong general trend that pre-trained models that are able to attain lower intrinsic dimensions generalize better.}
    \label{fig:roberta_gen_eval_acc}
\end{figure}

In Figure~\ref{fig:roberta_gen_eval_acc} we plot the evaluation accuracy's achieved by our pre-training experiment in Section~\S\ref{sec:trajectory}. A lower intrinsic dimension is strongly correlated with better evaluation performance. Additionally we are interested in measuring relative generalization gap ($\frac{acc_{train}-acc_{eval}}{1-acc_{eval}}$) across intrinsic dimension. We select the training accuracy that provides us with the best evaluation metrics when computing this figure. 
\begin{figure}
    \centering
    \includegraphics[width=1.0\textwidth]{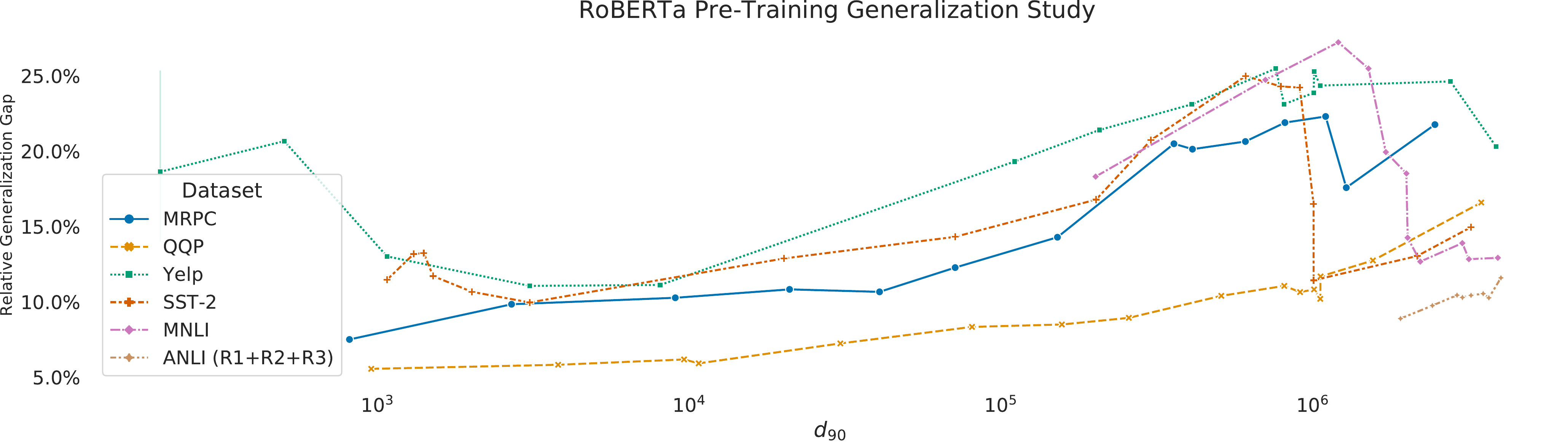}
    \caption{We plot the intrinsic dimension and the respective relative generalization gap across a set of varied tasks.}
    \label{fig:roberta_gen_rel_gap}
\end{figure}

We present our results in Figure~\ref{fig:roberta_gen_rel_gap}. Lower intrinsic dimension once again correlates strongly with a smaller relative generalization gap. If we interpret the intrinsic dimension as a measure of complexity, we expect the generalization gap to decrease with intrinsic dimension.

\subsubsection{Generalization Bounds}
By applying standard compression based generalization bounds, we can provide theoretical backing to the empirical connection between intrinsic dimension and generalization \citep{compression_generalization_gap}.

Consider the following definition of multi-class classification loss with an optional margin over our supervised dataset $D$. 
\begin{equation}
    \mathcal{L}_{\gamma}(f) = \mathbb{P}_{(x,y)\sim D}\left[f(x)[y] \le \gamma + \max_{i \ne y} f(x)[j] \right]
\end{equation}

When $\gamma = 0$, $\mathcal{L}_0$ recovers the standard classification loss. Furthermore, Let $\hat{\mathcal{L}}_{\gamma}(f)$ be an unbiased empirical estimate of the margin loss. 
\begin{theorem}
Let $f$ be a function which is parameterized by $\theta^D$ as described in Equation~\ref{eq:subspace_def} with a total of $d$ trainable intrinsic parameters on a dataset with $m$ samples. Then with a high probability, we can state the following asymptotic generalization bound
\begin{equation}
    \mathcal{L}_{0}(f) \leq \hat{\mathcal{L}}_{0}(f) + \mathcal{O}\left(\sqrt{\frac{d}{m}}\right)
\end{equation}
\end{theorem}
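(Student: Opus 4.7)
The plan is to treat the fine-tuned predictor $f$ as a hypothesis drawn from a hypothesis class whose effective parameter count is $d$ rather than $D$. The reparameterization $\theta^{D} = \theta^{D}_0 + P(\theta^{d})$ of Equation~\ref{eq:subspace_def} has the property that $\theta^{D}_0$ and $P$ are fixed before looking at the data (the projection is determined once, e.g.\ by a single random seed for the Fastfood factors $H$, $G$, $\Pi$, $B$). Consequently the map $\theta^{d} \mapsto f(\,\cdot\,;\theta^{D}_0 + P(\theta^{d}))$ defines a $d$-parameter family $\mathcal{F}_d$ of classifiers, and the learning problem during fine-tuning is really a problem over this $d$-dimensional family.

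First I would bound the complexity of $\mathcal{F}_d$ via a standard covering argument. Restrict $\theta^{d}$ to a Euclidean ball of some polynomially-large radius $R$; under mild Lipschitzness of $f$ in its parameters (which a pre-trained transformer with bounded weights and bounded inputs satisfies) and boundedness of the operator norm of the projection $P$, the $\varepsilon$-covering number of $\mathcal{F}_d$ in the relevant pseudo-metric obeys $\log N(\mathcal{F}_d,\varepsilon) = O\!\bigl(d\log(R L/\varepsilon)\bigr)$, where $L$ absorbs the Lipschitz and projection constants. This is the same scaling one gets for any $d$-dimensional Lipschitz family and is completely independent of $D$.

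Second I would invoke the compression-based generalization theorem of \cite{compression_generalization_gap}: any classifier whose description fits in $q$ effective bits satisfies $\mathcal{L}_0(f) \le \hat{\mathcal{L}}_\gamma(f) + \tilde{O}\bigl(\sqrt{q/m}\bigr)$ with high probability for any fixed margin $\gamma \ge 0$. Discretizing the intrinsic coordinates $\theta^{d}$ at precision $\varepsilon = 1/\mathrm{poly}(m)$ yields $q = O(d)$ up to the logarithmic factors hidden in the $\mathcal{O}(\cdot)$. Specializing to $\gamma = 0$ (and arguing, as in the compression bound, that the margin loss at the chosen scale controls the $0$-$1$ loss) delivers
\begin{equation*}
    \mathcal{L}_{0}(f) \;\le\; \hat{\mathcal{L}}_{0}(f) + \mathcal{O}\!\left(\sqrt{\tfrac{d}{m}}\right),
\end{equation*}
which is precisely the stated asymptotic bound once log factors and the $\log(1/\delta)$ from the high-probability statement are swept into the $\mathcal{O}$.

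The main obstacle is controlling the covering number uniformly: one must check that the Fastfood projection $M = HG\Pi HB$, together with the pre-trained initialization $\theta^{D}_0$, does not blow up the effective Lipschitz constant in a way that would inflate the bound by factors of $D$. Concentration for $H$, $G$, $B$, $\Pi$ (Johnson--Lindenstrauss style) keeps $\|M\|_{\mathrm{op}}$ well-behaved with high probability, and the restriction of $\theta^{d}$ to a polynomial-radius ball can be justified by an early-stopping / weight-norm argument during optimization. Once these boundedness facts are in place, the rest is a mechanical application of the cited compression bound.
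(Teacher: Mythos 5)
Your proposal is correct and takes essentially the same route as the paper: invoke the compression-based generalization bound of \cite{compression_generalization_gap} on the discretized $d$-dimensional intrinsic coordinates, treating the random-projection seed and the pre-trained weights $\theta^{D}_0$ as a data-independent helper string, so the effective description length is $O(d)$ bits (times a $\log r$ precision factor absorbed into the $\mathcal{O}$) and the gap is $\mathcal{O}(\sqrt{d/m})$. The one place the paper is lighter than you: rather than running a covering-number argument and controlling Lipschitz constants of $f$ under the Fastfood transform to bound the margin error $\gamma$ introduced by quantization, it simply observes that the trained model is already stored at a fixed finite precision (FP32/FP16), so the discretized $\theta^{d}$ \emph{is} the classifier rather than an approximation of it — making $f$ exactly $(0,S)$-compressible and letting $\gamma=0$ without any Johnson--Lindenstrauss-style concentration for $M=HG\Pi HB$ or any restriction of $\theta^{d}$ to a bounded ball.
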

\begin{proof}
We defer the proof Section~\S\ref{sec:proofs} in the Appendix. We note that this is an extension of the well-known compression based generalization bound explored by \cite{compression_generalization_gap}.  
\end{proof}

This generalization bound is independent of the underlying parameter count ($D$) of the pre-trained model but depends on the ability to compress the downstream task ($d$). Moreover, given that our previous section shows larger models compress better, our bounds are aligned with general intuition and recent empirical evidence that larger pre-trained models generalize better. Explicitly, these bounds only apply to pre-trained methods trained with the intrinsic dimension subspace method; research has yet to show that standard SGD optimizes in this low dimensional space (although experimentally, this seems to be confirmed). We leave the theoretical contribution of showing SGD optimizes in this space, resembling something such as intrinsic subspace, for future work.

We want to highlight that generalization is not necessarily measured by the pre-trained model's parameter count or measure of complexity, but the pre-trained model's ability to facilitate the compression of downstream tasks. In some sense, if we want to compress downstream tasks better, we must expect pre-trained representations to have a considerable measure of complexity.

\section{Conclusion}
In conclusion, we proposed viewing the various phenomena surrounding fine-tuning and pre-training through the lens of intrinsic dimensionality. We empirically showed that common natural language tasks could be learned with very few parameters, sometimes in the order of hundreds, when utilizing pre-trained representations.  We provided an interpretation of pre-training as providing a compression framework for minimizing the average description length of natural language tasks and showed that pre-training implicitly minimizes this average description length.

We continued by doing an empirical study of existing pre-training methods and their respective intrinsic dimension, uncovering the phenomena that intrinsic dimensionality decreases as we increase the number of pre-trained representation parameters. This phenomenon provides some intuitions to the trend of growing pre-trained representations. We connected intrinsic dimensionality with generalization by first showing that pre-trained models with lower intrinsic dimensions across various tasks achieve higher evaluation accuracies and lower relative generalization gaps. Furthermore, we explain these empirical results by applying well-known generalization bounds to the intrinsic dimension to get generalization bounds that grow on the order of the intrinsic dimension, not on the pre-trained model's parameter count.

Intrinsic dimensionality is a useful tool for understanding the complex behavior of large models. We hope that future work will make explicit theoretical connections between SGD and optimizing the intrinsic dimension as well as explain exactly why pre-training methods optimize the intrinsic dimensionailty of tasks before not seen.

\bibliography{iclr2020_conference}
\bibliographystyle{iclr2020_conference}

\appendix
\section{Appendix}
\subsection{Proofs}
\label{sec:proofs}
\cite{compression_generalization_gap} define $(\gamma, S)$ compressible using helper string $s$ as the following.
\begin{definition}
$(\gamma, S)$ compressible using helper string $s$ 

Suppose $G_{\mathcal{A},s}= \left\{g_{\theta,s}|\theta \in \mathcal{A}\right\}$ is a class of
classifiers indexed by trainable parameters A and fixed strings s. A classifier $f$ is $(\gamma, S)$-compressible
with respect to $G_{\mathcal{A}}$ using helper string s if there exists $\theta \in \mathcal{A}$ such that for any $x \in S$, we have
for all y
\begin{equation}
|f(x)[y] - g_{\theta,s}(x)[y]| \leq \gamma 
\end{equation}
\end{definition}
\begin{remark}
If we parameterize $f(x; \theta)$ via the intrinsic dimension approach as defined in Equation~\ref{eq:subspace_def}, then $f$ is compressible losslessly using a helper string consisting of the random seed used to generate the static random projection weights and the initial pre-trained representation $\theta^D_0$. Therefore we say $f$ parameterized by either DID or SAID is $\left(0, S\right)$ compressible.
\end{remark}
Theorem~$2.1$ in \cite{compression_generalization_gap} states given a compression consisting of $r$ discrete states we achieve the following generalization bound.
\begin{equation}
    \mathcal{L}_{0}(f) \leq \hat{\mathcal{L}}_{\gamma}(f) + O\left(\sqrt{\frac{d \log{r}}{m}}\right)
\end{equation}

We can trivially represent our parameters $\theta_d$ in a discrete fashion through discretization (as was done in \cite{compression_generalization_gap}), and the number of states is dependent on the level of quantization but is static once chosen (FP32 vs. FP16).

We then connect the fact that models trained in low dimensional subspace using SAID/DID methods are (0, S)-compressible to derive the final asymptotic bound.

\begin{equation}
    \mathcal{L}_{0}(f) \leq \hat{\mathcal{L}}_{0}(f) + \mathcal{O}\left(\sqrt{\frac{d}{m}}\right)
\end{equation}
\end{document}